\newcommand{\ourtechnique}{disentangled influence audits}
\newcommand{\wrapmodel}{disentangled model}
\newtheorem{prop}{Proposition}
\newcommand{\reals}{\mathbb{R}}
\newcommand{\shap}{\texttt{shap}}
\newcommand{\bba}{\texttt{BBA}}
\newcommand{\mse}{\operatorname{MSE}}
\def\blfootnote{\xdef\@thefnmark{}\@footnotetext}
\title{Disentangling Influence: Using disentangled representations to audit model predictions}
\author{%
   Charles T. Marx \\
   Haverford College \\
   \texttt{cmarx@haverford.edu} \\
   \And
   Richard Lanas Phillips \\
   Cornell University \\
   \texttt{rlp246@cornell.edu} \\
   \And
   Sorelle A. Friedler \\
   Haverford College \\
   \texttt{sorelle@cs.haverford.edu} \\
   \AND
   Carlos Scheidegger \\
   University of Arizona \\
   \texttt{cscheid@cs.arizona.edu} \\
    \And
   Suresh Venkatasubramanian \\
   University of Utah \\
   \texttt{suresh@cs.utah.edu}
}
\begin{document}
\blfootnote{Partially supported by the NSF under grants IIS-1513651, IIS-1633724, IIS-1633387 and DMR-1709351, the DARPA SD2 program, and the Arnold and Mabel Beckman Foundation.  The Titan Xp GPU used for this research was donated by the NVIDIA Corporation.  Code can be found at \url{https://github.com/charliemarx/disentangling-influence}}

\maketitle

\begin{abstract}
  Motivated by the need to audit complex and black box models, there has been
  extensive research on quantifying how data features influence model
  predictions. Feature influence can be direct (a direct
  influence on model outcomes) and indirect (model outcomes are influenced via
  proxy features). Feature influence can also be expressed in aggregate over the
  training or test data or locally with respect to a single point. Current
  research has typically focused on one of each of these dimensions. 

  In this paper, we develop \emph{\ourtechnique}, a procedure
  to audit the indirect influence of features. 
  Specifically, we show that disentangled representations provide a mechanism
  to identify proxy features in the dataset, while allowing an explicit
  computation of feature influence on either individual outcomes or
  aggregate-level outcomes. 
  We show through both theory and experiments that \ourtechnique\ can both
  detect proxy features and show, for each individual or in aggregate, which of these proxy
  features affects the classifier being audited the most. In this respect, our
  method is more powerful than existing methods for ascertaining feature
  influence. 
\end{abstract}

\section{Introduction}

As machine learning models have become increasingly complex, there has been a
growing subfield of work on interpreting and explaining the predictions of these
models \cite{molnar2018interpretable, guidotti2018survey}.  In order to assess
the importance of particular features to aggregated model predictions or
outcomes for an individual instance, a variety of direct and indirect
\emph{feature influence} techniques have been developed.  While direct feature
influence \cite{datta2016, Henelius2014BlackBox, lundberg2017unified, lime}
focuses on determining the importance of features used directly by the model to
determine an outcome, \emph{indirect feature influence} techniques
\cite{adler2018auditing} report that a feature is important if that feature
\emph{or a proxy} had an influence on the model outcomes. 

Feature influence methods can focus on the influence of a feature taken over all
instances in the training or test set \cite{datta2016, adler2018auditing}, or on
the local feature influence on a single \emph{individual} item of the training
or test set \cite{lime, lundberg2017unified} (both of which are different than
the influence of a specific training instance on a model's parameters
\cite{koh2017understanding}).  Both the global perspective given by considering
the impact of a feature on all training and/or test instances as well as the
local, individual perspective can be useful when auditing a model's predictions.
Consider, for example, the question of fairness in an automated hiring decision:
determining the indirect influence of gender on all test outcomes could help us
understand whether the system had disparate impacts overall, while an
individual-level feature audit could help determine if a specific person's
decisions were due in part to their gender.\footnote{While unrelated to feature
  influence, the idea of \emph{recourse} \cite{ustun2019actionable} also emphasizes the
  importance of individual-level explanations of an outcome or how to change it.}

\paragraph{Our Work.}

In this paper we present a general technique to perform both global and
individual-level indirect influence audits. Our technique is \emph{modular} --
it solves the indirect influence problem by reduction to a \emph{direct
  influence} problem, allowing us to benefit from existing techniques. 

Our key insight is that \emph{disentangled representations} can be used
to do indirect influence computation. The idea of a disentangled representation
is to learn independent factors of variation that reflect the natural symmetries
of a data set. This approach has been very successful in generating
representations in deep learning that can be manipulated while creating
realistic inputs \cite{AlemiFD016,bengio2013representation, esmaeili19a,  KumarVID, tschannen2018recent}. Related methods use competitive learning to ensure a representation is free of protected information while preserving other information \cite{storkey2016,laftr}. 

In our context, the idea is to \emph{disentangle} the
influence of the feature whose (indirect) influence we want to compute. By doing
this, we obtain a representation in which we can manipulate the feature directly
to estimate its influence.
Our approach has a number of advantages. We can connect indirect influence in the native representation to
  direct influence in the disentangled representation. Our method creates a
  \emph{\wrapmodel}: a wrapper to the original model with the disentangled
  features as inputs. This implies that it works for (almost) any model for
  which direct influence methods work, and also allows us to use any future developed direct influence model. 

Specifically, our \ourtechnique\ approach provides the following contributions:

\begin{enumerate}
\item Theoretical and experimental justification that the \wrapmodel\ and associated \ourtechnique\ we create provides an accurate indirect influence audit of complex, and potentially black box, models.

\item Quality measures, based on the error of the disentanglement and the error of the reconstruction of the original input, that can be associated with the audit results.

\item An indirect influence method that can work in association with both global and individual-level feature influence mechanisms.  Our \ourtechnique\ can additionally audit continuous features and image data; types of audits that were not possible with previous indirect audit methods (without additional preprocessing).

\end{enumerate}

\section{Our Methodology}
\label{sec:theory}

\subsection{Theoretical background}
\label{sec:formalisms}

Let $P$ and $X$ denote sets of attributes with associated domains
$\mathcal{P}$ and $\mathcal{X}$. $P$ represents \emph{features of
  interest}: these could be protected attributes of the data or any other
features whose influence we wish to determine. For convenience
we will assume that $\mathcal{P}$ consists of the values taken by a single
feature -- our exposition and techniques work more generally. $X$ represents other attributes of the data that may or
may not be influenced by features in $P$. An \emph{instance} is thus a point
$(p,x) \in \mathcal{P} \times \mathcal{X}$. Let $Y$ denote the space of
\emph{labels} for a learning task ($Y = \{+1,-1\}$ for binary
classification or $\reals$ for regression).

\paragraph{Disentangled Representation.}

Our goal is to find an alternate representation of an instance
$(p,x)$. Specifically, we would like to construct $x' \in \mathcal{X'}$ that
represents all factors of variation that are \emph{independent} of $P$, as well
as a mapping $f$ such that $f(p,x) = (p, x')$. We will refer to the associated
new domain as $\mathcal{D}' = \mathcal{P} \times \mathcal{X}'$. We can formalize this using the
framework of \cite{higgins2018towards}. For any $(p,x)$, we can define a group action implicitly in terms of its orbits: specifically, we define an equivalence
relation $(p,x_1) \equiv (p', x_2)$ if in the underlying data, changing $p$ to $p'$
would change $x_1$ to $x_2$. Note that this is an orbit with respect to the
permutation group $S_m$ on $\mathcal{P}$ (where $m = |\mathcal{P}|$ is the size
of the domain $\mathcal{P}$). Our goal is to find an equivariant function $f$
and an associated group action that yields the desired disentangled
representation. 

We can define a group action $\circ : S_m \times \mathcal{D}' \rightarrow \mathcal{D}'$ on the \emph{disentangled} representation $(p,x')$ as the
mapping
$ \pi \circ (p, x') = (\pi(p), x') $.
Then, given $f$ such that $f(x) = x'$, it is
equivariant $(\pi \circ (p, f(x)) = f(\pi \circ(p, x))$ and the representation $(p, x')$ satisfies the property of being
disentangled. Formally, the group action is the product of
$S_m$ and the identity mapping on $x'$, but for clarity we omit this detail. 

\[\begin{CD} 
(p,x) @>\pi >> (\pi(p),x) \\ 
@V f VV @V f VV \\
(p,x') @>\pi>> (\pi(p), x')
\end{CD}\]

\paragraph{Direct and indirect influence}
Given a model $M : \mathcal{D} \to Y$, a \emph{direct influence} measure
quantifies the degree to which any particular feature influences the outcome of
$M$ on a specific input.  In this paper, we use the SHAP values proposed by
\cite{lundberg2017unified} that are inspired by the Shapley values in game
theory. For a model $M$ and input $x$, the influence of feature $i$ is defined
as \cite[Eq. 8]{lundberg2017unified}
$ \phi_{i}(M, x)=\sum_{z \subseteq x}
  \frac{\left|z\right| !\left(n-\left|z\right|-1\right) !}{n
    !}\left[M_{x}\left(z\right)-M_{x}\left(z \setminus
      i\right)\right] $
where $\|z\|$ denotes the number of nonzero entries in $z$, $z \subseteq x$ is a
vector whose nonzero entries are a subset of the nonzero entries in $x$, $z
\setminus i$ denotes $z$ with the $i^{th}$ entry set to zero, and $n$ is the
number of features. Finally, $M_x(z) = E[M(z)|z_S]$, the conditional expected
value of the model subject to fixing all the nonzero entries of $z$ ($S$ is the
set of nonzero entries in $z$).

\emph{Indirect influence} attempts to capture how a feature might influence the
outcome of a model even if it is not explicitly represented in the data, i.e its
influence is via \emph{proxy} features. The above direct influence measure
cannot capture these effects because the information encoded in a feature $i$ might
be retained in other features even if $i$ is removed. 
We say that the \emph{indirect influence} of feature $i$ on the outcome of
model $M$ on input $x$ is the direct influence of some \emph{proxy} for $i$,
where a proxy for $i$ consists of a set of features $S$ and a
function $g$ that \emph{predicts} $i$: i.e such that $g(x_S) \simeq
x_{(i)}$. Note that this generalizes in particular the notion of indirect
influence defined by \cite{adler2018auditing}: in their work, indirect
influence is defined via an explicit attempt to first remove any possible proxy
for $i$ and then evaluate the direct influence of $i$. Further, note that if
there are no features that can predict $x_i$, then the indirect and direct
influence of $i$ are the same (because the only proxy for $i$ is itself). 

\paragraph{Disentangled influence}
The key insight in our work is that disentangled representations can be used to
compute indirect influence. Assume that we have an initial representation of a
feature vector as $(p,x)$ and train a model $M$ on labeled pairs $((p,x),
y)$. Our goal is to determine the indirect influence of $p$ on the model
outcome. Suppose we construct a disentangled representation $(p,x')$ as defined
above, with the associated encoding function $f(x) = x'$ and decoding function
$f^{-1}$.

\begin{prop}
  The indirect influence of $p$ on the outcome of $M$ on input $x$ equals
  $\phi_p(M', x')$, where $M' = f^{-1} \circ M$. 
\end{prop}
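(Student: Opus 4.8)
The plan is to split the claim into two independent steps and chain them. \emph{Step 1}: show that in the disentangled domain $\mathcal{D}'$ the feature $p$ has \emph{no} proxy, so that the direct influence $\phi_p(M',x')$ already coincides with the \emph{indirect} influence of $p$ on $M'$ at $x'$. \emph{Step 2}: show that indirect influence of $p$ is invariant under the reparametrization $f$ — since $f$ is invertible, fixes the $p$-coordinate, and is equivariant (the commuting square above), the model $M' = f^{-1}\circ M$ decodes to $M$ and the proxies for $p$ on the two sides are in influence-preserving bijection, so the indirect influence of $p$ on $M$ at $x$ equals the indirect influence of $p$ on $M'$ at $x'$. Composing Step 1 and Step 2 yields the equality in the statement.

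For Step 1, I would use that $x'$ is, by construction, the collection of factors of variation \emph{independent} of $P$: the equivariance diagram says that acting on $p$ by any $\pi\in S_m$ leaves $x'$ fixed. Hence for every subset $S$ of the coordinates of $x'$ and every candidate predictor $g$, the value $g(x'_S)$ carries no information about $p$, so it cannot satisfy $g(x'_S)\simeq p$. Thus the only proxy for $p$ in $\mathcal{D}'$ is $p$ itself, and by the observation already recorded in the text (``if there are no features that can predict $x_i$, then the indirect and direct influence of $i$ are the same'') the indirect influence of $p$ on $M'$ at $x'$ is exactly $\phi_p(M',x')$.

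For Step 2, I would use that $f$ is a bijection on the non-$p$ coordinates with inverse $f^{-1}$, that it leaves $p$ untouched, and that $M'$ decodes to $M$ in the sense that $M'$ on a disentangled instance applies $f^{-1}$ and then $M$; in particular $M'$ and $M$ agree on $(p,x')$ and $(p,x)$ respectively. A proxy $(S,g)$ for $p$ relative to $M$ then pulls back to the proxy with feature set $f(S)$ and predictor $g\circ f^{-1}$ relative to $M'$, and conversely; and because the SHAP value $\phi_p$ depends only on the conditional expectations $M_x(z)=\mathbb{E}[M(z)\mid z_S]$, which $f$ transports consistently (it pushes the conditional law of $x$ given $p$ to the conditional law of $x'$ given $p$), the direct influence of a proxy is unchanged by the coordinate change. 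Hence the information-complete proxy and its direct influence match on both sides, so the two indirect influences are equal.

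The main obstacle is making the ``no proxy'' claim and the invariance claim rigorous under whatever formal reading of ``proxy'' and of $\simeq$ is adopted: one must check that \emph{removing} the $p$-coordinate inside the SHAP computation over $\mathcal{D}'$ genuinely corresponds to marginalizing out \emph{all} proxy information about $p$ in the original representation, not merely $p$'s explicit appearance — this is precisely where the independence of $x'$ from $p$ is essential, and it is also the step that degrades gracefully rather than holding exactly when the learned disentanglement is imperfect, which is why the paper later attaches disentanglement- and reconstruction-error quality measures to the audit. A secondary technical point is that the argument needs $f$ to be genuinely invertible and distribution-transporting for the conditional expectations in the SHAP formula to line up; if $f^{-1}$ is only an approximate decoder, the equality becomes an approximation with error governed by the reconstruction error.
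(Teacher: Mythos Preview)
Your proposal is correct and follows essentially the same route as the paper: the paper's proof is a two-sentence sketch that (i) argues no proxy for $p$ can exist in the components of $x'$ because otherwise $f$ would fail to be equivariant, and (ii) concludes that it therefore suffices to compute the direct influence of $p$ on the model $M' = f^{-1}\circ M$ that decodes and then applies $M$. Your Step~1 is exactly (i), and your Step~2 makes explicit the reparametrization-invariance that the paper leaves implicit in (ii); your closing caveats about approximation under imperfect disentanglement or decoding also anticipate precisely the paper's subsequent ``Dealing with errors'' paragraph.
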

\begin{proof}
  By the properties of the disentangled representation, there is no proxy for
  $p$ in the components of $x'$: if there were, then it would not be true that
  the $f$ was equivariant (because we could not factor the action on $p$
  separately from the identity mapping on $x'$).

  Thus, if we wished to compute the indirect influence of $p$ on model $M$ with
  outcome $x$, it is sufficient to compute the direct influence of $p$ on the
  model that first converts from the disentangled representation back to the
  original representation and then applies $M$. 
\end{proof}

\paragraph{Dealing with errors.}

The above proposition holds if we are able to obtain a perfectly disentangled and
invertible representation. In practice, this might not be the case and the
resulting representation might introduce errors. In particular, assume that our
decoder function is some $g \not= f^{-1}$. While we do not provide an  explicit formula for the
dependence of the influence function parameters, we  note that it is a linear
function of the predictions, and so we can begin to understand the errors in the
influence estimates by looking at the behavior of the predictor 
with respect to $p$. 

Model output can be written as $\hat{y} = (M \circ g)(p, x')$. Recalling that
$g(p, x') = (p, \hat{x})$, the partial derivative of $\hat{y}$ with
respect to $p$ can be written as
$  \frac{\partial \widehat y}{\partial p} 
 =\frac{\partial (M \circ g)}{\partial
    p} 
   =\frac{\partial M}{\partial \hat{x}} \frac{\partial \hat{x}}{\partial p} +
    \frac{\partial M}{\partial p} 
  = \frac{\partial M}{\partial \hat{x}} \frac{\partial \hat{x}}{\partial x'}
    \frac{\partial x'}{\partial p} +
    \frac{\partial M}{\partial p} 
    $.
Consider the term $\frac{\partial x'}{\partial p}$. If the disentangled
representation is perfect, then this term is zero (because $x'$ is unaffected by
$p$), and therefore we get  $\frac{\partial \widehat y}{\partial p} =
\frac{\partial M}{\partial p}$ which is as we would expect. If the
reconstruction is perfect (but not necessarily the disentangling), then the term
$\frac{\partial \hat{x}}{\partial x'}$ is $1$. What remains is the partial
derivative of $M$ with respect to the latent encoding $(x', p)$.

\subsection{Implementation}
\label{sec:methodology}

Our overall process requires two separate pieces: 1) a method to create
disentangled representations, and 2) a method to audit direct features. 
In most experiments in this paper, we use adversarial autoencoders \cite{makhzani2015adversarial} to generate
disentangled representations, and Shapley values from the \shap\
technique for auditing direct features \cite{lundberg2017unified} (as described
above in Section~\ref{sec:formalisms}). 

\paragraph*{Disentangled representations via adversarial autoencoders}
We create disentangled representations by training three separate neural networks, which we denote $f$, $g$, and $h$ (see Figure~\ref{fig:system}).
Networks $f$ and $g$ are 
autoencoders: the image of $f$ has lower dimensionality than the domain of $f$, and the training process seeks for $g \circ f$ to be an approximate identity, through gradient descent on the reconstruction error $||(g \circ f)(x) - x||$.
Unlike regular autoencoders, $g$ is also given direct access to the protected attribute.
\emph{Adversarial} autoencoders \cite{makhzani2015adversarial}, in addition, use an ancillary network $h$ that attempts to recover the protected attribute from the image of $f$, \emph{without access to $p$ itself}.
(Note the slight abuse of notation here: $h$ is assumed not to have access to $p$, while $g$ does have access to it.)
During the training of $f$ and $g$, we seek to reduce $||(g \circ f)(x) - x||$, but also to \emph{increase the error} of the discriminator $h \circ f$.
\begin{wrapfigure}{r}{0.5\textwidth}
\begin{center}
\includegraphics[width=2.5in]{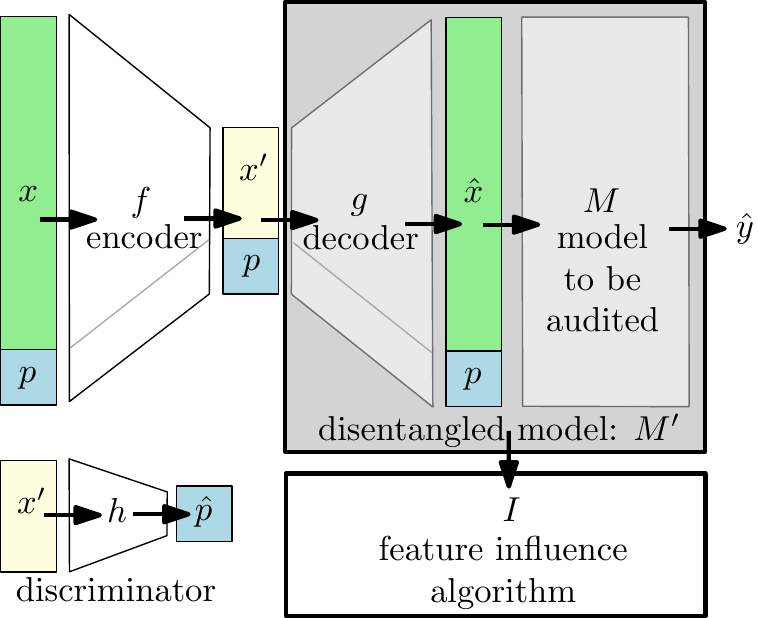}
\caption{System diagram when auditing the indirect influence of feature $p$ on the outcomes of model $g$ for instance $x$ using direct influence algorithm $I$.}
\vspace{-.2in}
\label{fig:system}
\end{center}
\end{wrapfigure}
The optimization process of $h$ tries to recover the protected attribute from the code generated by $f$.
($h$ and $f$ are the \emph{adversaries}.)
When the process converges to an equilibrium, the code generated by $f$ will contain no information about $p$ that is useful to $h$, but $g \circ f$ still reconstructs the original data correctly: $f$ disentangles $p$ from the other features.

The loss functions used to codify this process are
$\mathcal L_{\texttt {Enc}} = \mse(x, \hat{x}) - \beta \mse(p, \hat{p})$, ~
$\mathcal L_{\texttt {Dec}} = \mse(x, \hat{x})$,~ and
$\mathcal L_{\texttt {Disc}} = \mse(p, \hat{p})$,
where $\mse$ is the mean squared error and $\beta$ is a hyperparameter determining the importance of disentanglement relative to reconstruction. When $p$ is a binary feature, $\mathcal L_{\texttt {Enc}}$ and $\mathcal L_{\texttt {Disc}}$ are adjusted to use binary cross entropy loss between $p$ and $\hat{p}$.

\paragraph*{Disentangled feature audits} Concretely, our method works as follows, where the variable names match the diagram in Figure~\ref{fig:system}:

\begin{codebox}
  \Procname{ $\proc{Disentangled-Influence-Audit}(X, M)$}
  \li \For $p$ \kw{in} \proc{Features}(X)
  \li \Do
  $(\id{f}, \id{g}, \id{h}) \gets \proc{Disentangled-Representation}(X, p)$ \Comment ($\id{h}$ is not used)
  \li $M' \gets g \circ M$
  \li $X' \gets \{ f(x)\ \For\ x\ \kw{in}\ X \} $
  \li $\proc{Shap}_p \gets \proc{Direct-Influence}(X', p, M')$
\End

\li \Return $\{ \proc{Shap}_p\ \For\ p\ \kw{in}\ \proc{Features}(X) \}$
\end{codebox}

We note here one important difference in the interpretation of disentangled influence values when contrasted with regular Shapley values.
Because the influence of each feature is determined on a \emph{different} disentangled model, the scores we get are not directly interpretable as a partition of the model's prediction.
For example, consider a dataset in which feature $p_1$ is responsible for 50\% of the direct influence, while feature $p_2$ is a perfect proxy for $p_1$, but shows 0\% influence under a direct audit. 
Relative judgments of feature importance remain sensible.

\section{Experiments}
\label{sec:experiments}

In this section, we'll assess the extent to which the \ourtechnique\ is able to identify sources of indirect influence to a model and quantify its error. All data and code for the described method and below experiments is available at \texttt{https://github.com/charliemarx/disentangling-influence}.

\subsection{Synthetic $x+y$ Regression Data}

In order to evaluate whether the indirect influence calculated by the \ourtechnique\ correctly captures all influence of individual-level features on an outcome, we will consider influence on a simple synthetic $x+y$ dataset.  It includes 5,000 instances two variables $x$ and $y$ drawn independently from a uniform distribution over $[0,1]$ that are added to determine the label $x+y$.  It also includes proxy variables $2x$, $x^2$, $2y$, and $y^2$.  A random noise variable $c$ is also included that is drawn independently of $x$ and $y$ uniformly from $[0,1]$.  The model we are auditing is a handcrafted model that contains no hidden layers and has fixed weights of 1 corresponding to $x$ and $y$ and weights of 0 for all other features (i.e., it directly computes $x+y$).  
We use \shap\ as the direct influence delegate method \cite{lundberg2017unified}.\footnote{This method is available via \texttt{pip install shap}.  See also: \url{https://github.com/slundberg/shap}}

In order to examine the impact of the quality of the disentangled representation on the results, we considered both a handcrafted disentangled representation and a learned one.  For the former, nine unique models were handcrafted to disentangle each of the nine features perfectly (see Appendix A for details).   
The learned disentangled representation is created according to the adversarial autoencoder methodology described in more detail in the previous section.

\begin{figure}[htb]
\begin{center}
\begin{tabular}{ccc}

& \textbf{Direct Influence~~~~} & \textbf{Indirect Influence~~~~~~~~~~}\\

\rotatebox{90}{~~~~~Handcrafted DR}
& \includegraphics[width=2in]{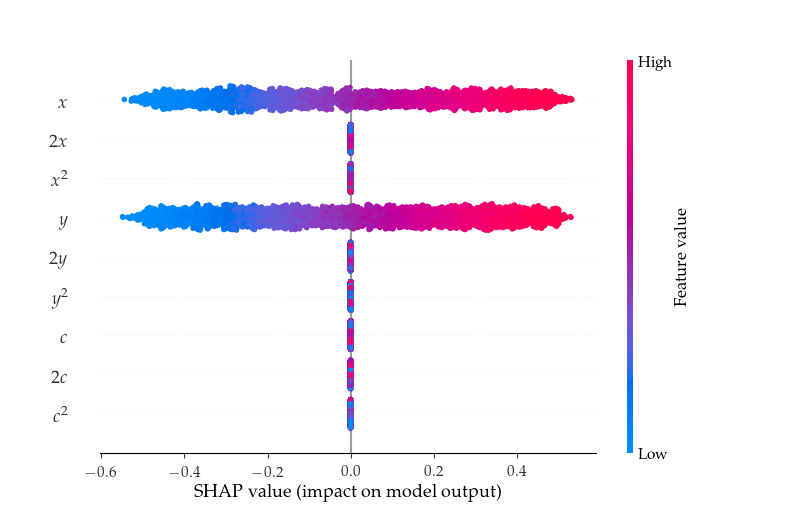}
&
\includegraphics[width=2in]{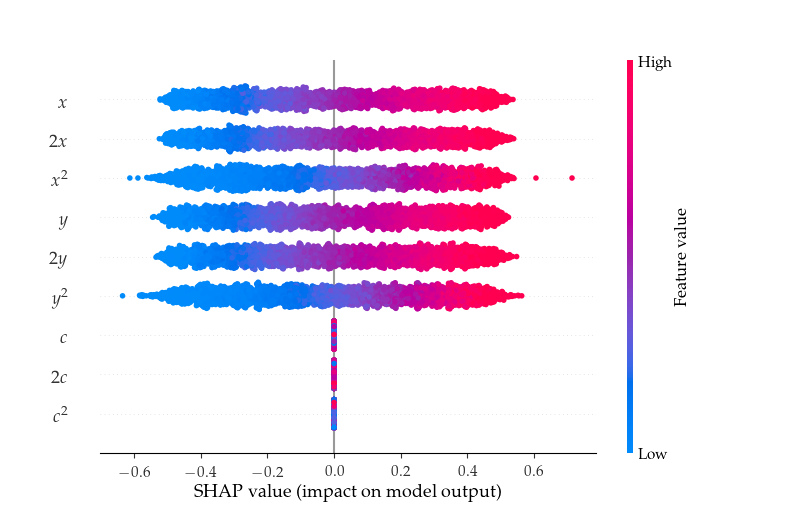}\\

\rotatebox{90}{~~~~~~~~~~Learned DR}
& \includegraphics[width=2in]{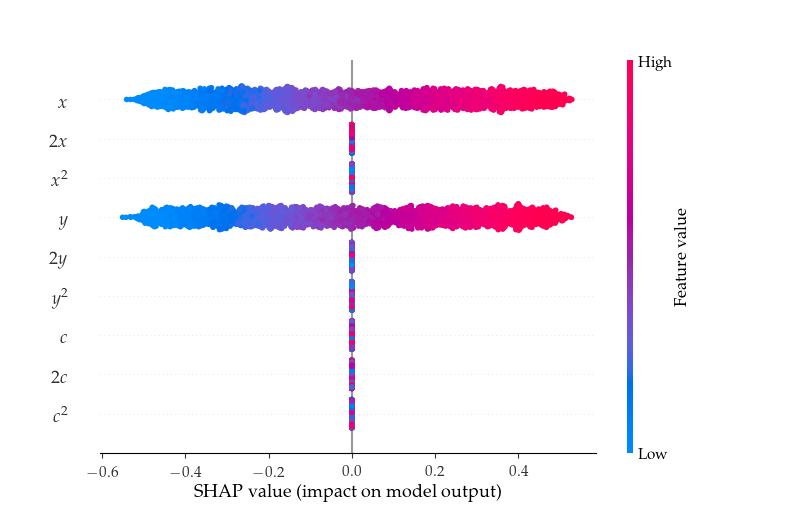}
&
\includegraphics[width=2in]{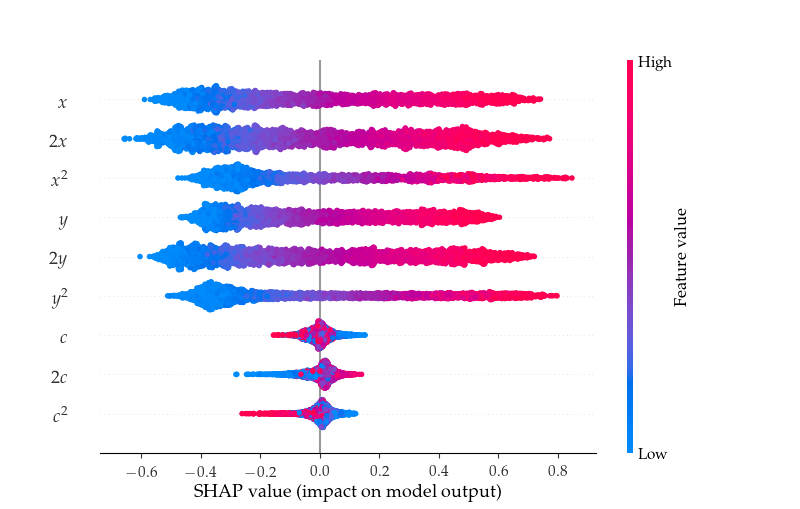}

\end{tabular}
\caption{Synthetic $x+y$ data direct \shap\ (left) and indirect (right) feature influences using a handcrafted (top row) or learned disentangled representation (bottom row).}
\label{fig:synth}
\end{center}
\end{figure}

The results for the handcrafted disentangled representation (top of Figure \ref{fig:synth}) are as expected: features $x$ and $y$ are the only ones with direct influence, all $x$ or $y$ based features have the same amount of indirect influence, while all features including $c$ have zero influence.  Using the learned disentangled representation introduces the potential for error: the resulting influences (bottom of Figure \ref{fig:synth}) show more variation between features, but the same general trends as in the handcrafted test case. 

Additionally, note that since \shap\ gives influence results per individual instance, we can also see that (for both models) instances with larger (or, respectively, smaller) $2x$ or $2y$ values give larger (respectively, smaller) results for the label $x + y$, i.e., have larger absolute influences on the outcomes.

\subsubsection{Error Analyses}

There are two main sources of error for \ourtechnique: error in the reconstruction of the original input $x$ and error in the disentanglement of $p$ from $x'$ such that the discriminator is able to accurately predict some $\hat{p}$ close to $p$.  We will measure the former error in two ways.  First, we will consider the \emph{reconstruction error}, which we define as $x - \hat{x}$.  Second, we consider the \emph{prediction error}, which is $g(x) - g(\hat{x})$ - a measure of the impact of the reconstruction error on the model to be audited.  Reconstruction and prediction errors close to 0 indicate that the \wrapmodel\ $M'$ is similar to the model $M$ being audited.  We measure the latter form of error, the \emph{disentanglement error}, as $\frac{1}{n} \sum_{i=1}^n (p - \hat{p})^2 / var(p)$ where $var(p)$ is the variance of $p$.  A disentanglement error of below $1$ indicates that information about that feature may have been revealed, i.e., that there may be indirect influence that is not accounted for in the resulting influence score.
In addition to the usefulness of these error measures during training time, they also provide information that helps us to assess the quality of the indirect influence audit, including at the level of the error for an individual instance.

\begin{figure}[htbp]
\begin{center}
\begin{tabular}{ccc}
\includegraphics[width=1.7in]{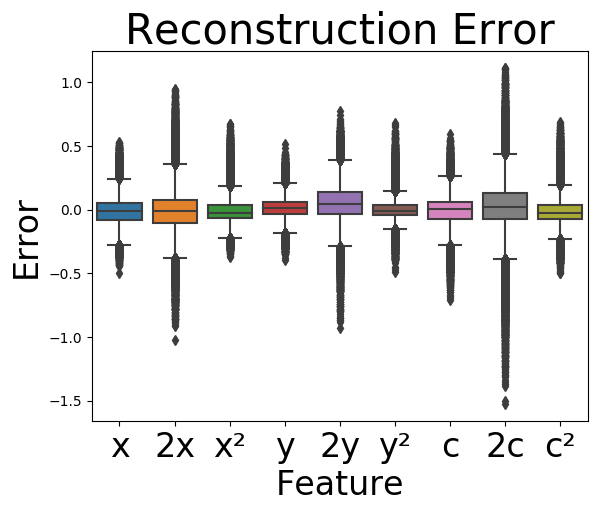}
&
\includegraphics[width=1.7in]{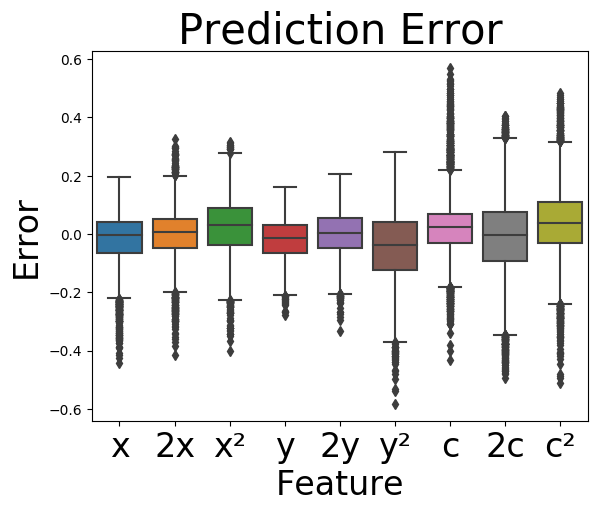}
&
\includegraphics[width=1.7in]{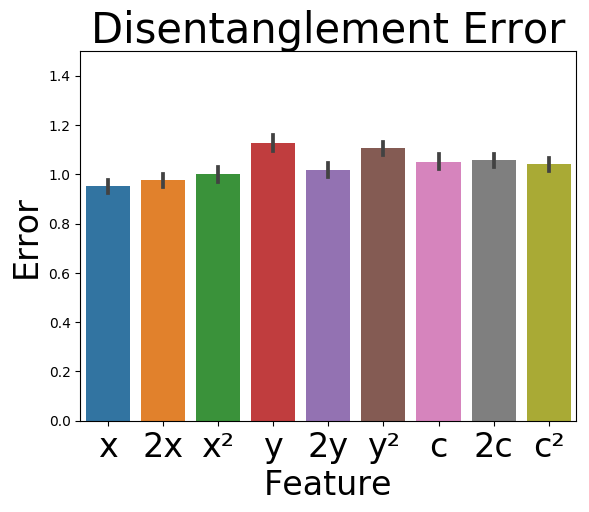}\\

\end{tabular}
\caption{Errors on the synthetic $x+y$ data for the reconstruction error (left) when taken across influence audits for each feature, prediction error (middle), and disentanglement error (right).}
\label{fig:synth_error}
\end{center}
\end{figure}

These influence experiments on the $x+y$ dataset demonstrate the importance of a good disentangled representation to the quality of the resulting indirect influence measures, since the handcrafted zero-error disentangled representation clearly results in more accurate influence results.  Each of the error types described above are given for the learned disentangled representation in Figure \ref{fig:synth_error}.  While most features have reconstruction and prediction errors close to 0 and disentanglement errors close to 1, a few features also have some far outlying instances.  For example, we can see that the $c, 2c, \mbox{ and } c^2$ variables have high prediction error on some instances, and this is reflected in the incorrect indirect influence that they're found to have on the learned representation for some instances.

\subsection{dSprites Image Classification}

\begin{wrapfigure}{r}{0.5\textwidth}
\begin{center}
  \textbf{Indirect Influence}\\
\includegraphics[width=2.5in]{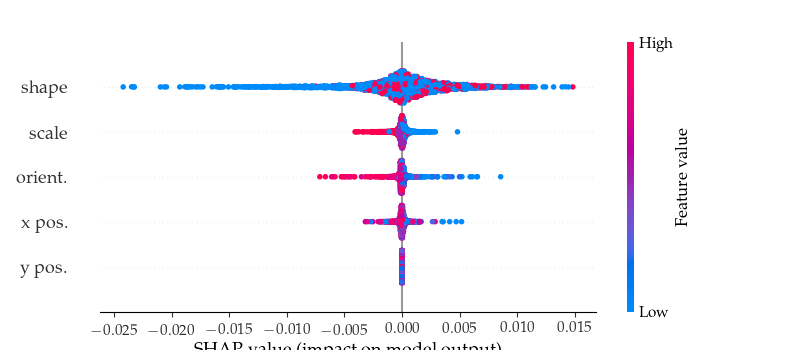}\\
\caption{dSprites data indirect latent factor influences on a model predicting shape.}
\label{fig:sprites}
\end{center}
\end{wrapfigure}

The second synthetic dataset is the \textbf{dSprites} dataset commonly used in the disentangled representations literature to disentangle independent factors that are sources of variation \cite{dsprites17}.  The dataset consists of $737,280$ images ($64 \times 64$ pixels) of a white shape (a square, ellipse, or heart) on a black background.  The independent latent factors are $x$ position, $y$ position, orientation, scale, and shape. The images were downsampled to $16 \times 16$ resolution and only the half of the data in which the shapes are largest were used due to the lower resolution.
The binary classification task is to predict whether the shape is a heart. A good disentangled representation should be able to separate the shape from the other latent factors. 

\begin{figure}[htbp]
\begin{center}
\begin{tabular}{ccc}

\includegraphics[width=1.5in]{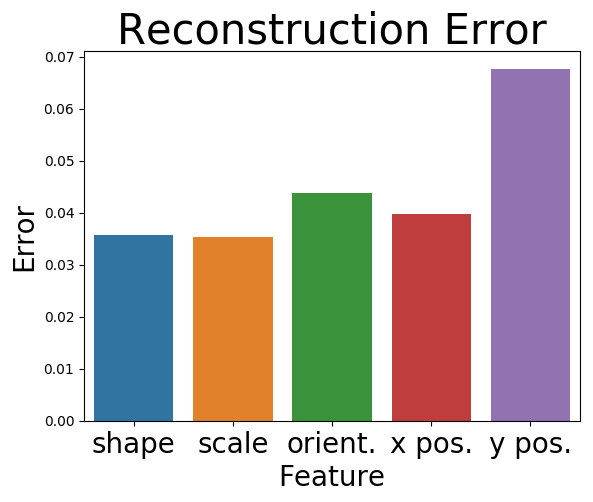}
&
\includegraphics[width=1.5in]{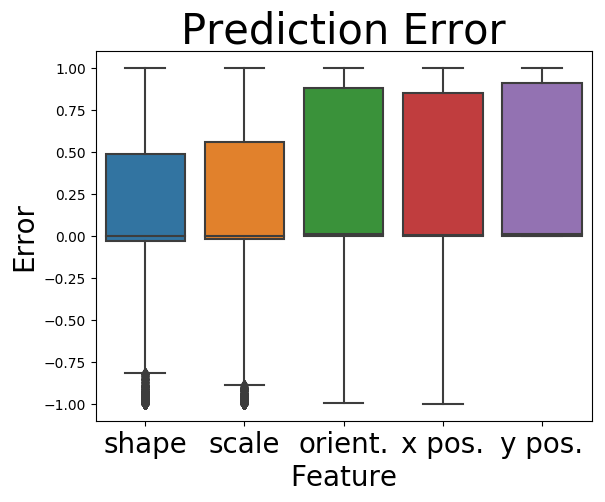} 
&
\includegraphics[width=1.5in]{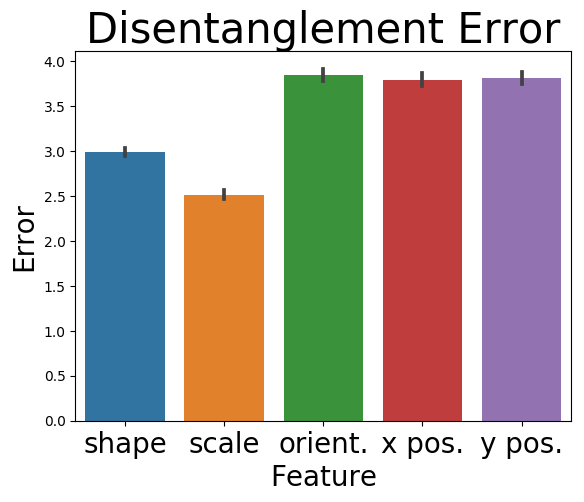}\\

\end{tabular}
\caption{The mean squared reconstruction error (left), absolute prediction error (middle), and absolute disentanglement error (right) of the latent factors in the dSprites data under an indirect influence audit.}
\label{fig:dsprites_error}
\end{center}
\end{figure}

In this experiment we seek to quantify the indirect influence of each latent factor on a model trained to predict the shape from an image. Since \texttt{shape} is the label and the latent factors are independent, we expect the feature \texttt{shape} to have more indirect influence on the model than any other latent factor. Note that a direct influence audit is impossible since the latent factors are not themselves features of the data.  Model and disentangled representation training information can be found in Appendix A.

The indirect influence audit, shown in Figure \ref{fig:sprites}, correctly identifies \texttt{shape} as the most important latent factor, and also correctly shows the other four factors as having essentially zero indirect influence.  However, the audit struggles to capture the extent of the indirect influence of \texttt{shape} since the resulting \shap\ values are small.

The associated error measures for the dSprites influence audit are shown in Figure \ref{fig:dsprites_error}.  We report the reconstruction error as the mean squared error between $x$ and $\hat {x}$ for each latent factor.
 The prediction error is the difference between $x$ and $\hat x$ of the model's estimate of the probability the shape is a heart.  While the reconstruction errors are relatively low (less than 0.05 for all but $y$ position) the prediction error and disentanglement errors are high. A high prediction error indicates that the model is sensitive to the errors in reconstruction and the indirect influence results may be unstable, which may explain the low \shap\ values for shape in the indirect influence audit.

\subsection{Adult Income Data}

\begin{figure}[htb]
\begin{center}
\includegraphics[height=1.7in]{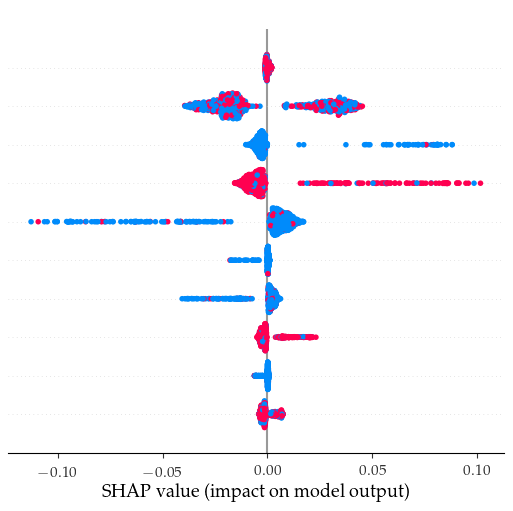}
~~
\includegraphics[height=1.7in]{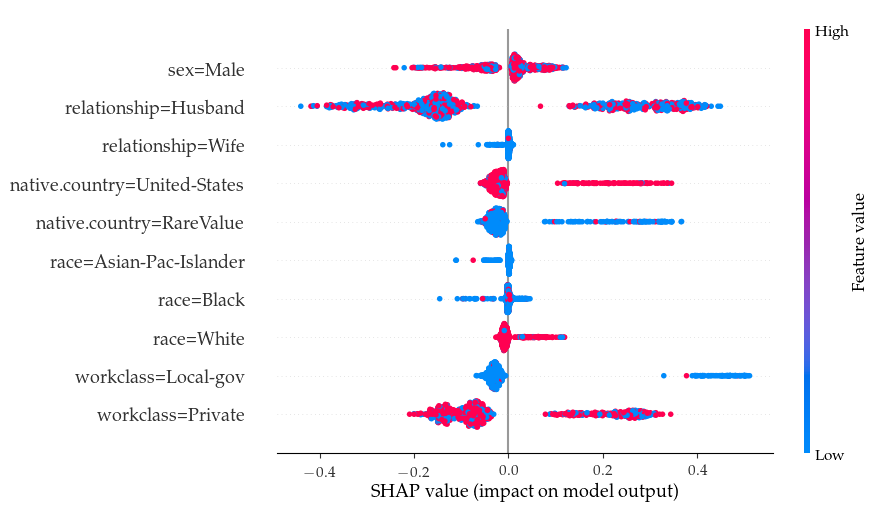}
\caption{Ten selected features for Adult dataset. Direct (left) and indirect (right) influence are shown. For all features, see Supplemental Material. Low values indicate a one-hot encoded feature is \texttt{false}.}
\label{fig:adult}
\end{center}
\end{figure}

Finally, we'll consider a real-world dataset containing \textbf{Adult Income} data that is commonly used as a test case in the fairness-aware machine learning community.  The Adult dataset includes 14 features describing type of work, demographic information, and capital gains information for individuals from the 1994 U.S. census \cite{adultData}.  The classification task is predicting whether an individual makes more or less than \$50,000 per year.  Preprocessing, model, and disentangled representation training information are included in Appendix A.

Direct and indirect influence audits on the Adult dataset are given in Figure \ref{fig:adult} and in Appendix B.  While many of the resulting influence scores are the same in both the direct and indirect cases, the \ourtechnique\ finds substantially more influence based on \texttt{sex} than the direct influence audit - this is not surprising given the large influence that sex is known to have on U.S. income.  Other important features in a fairness context, such as nationality, are also shown to have indirect influences that are not apparent on a direct influence audit.  The error results (Figure \ref{fig:adult_error} and Appendix B) indicate that while the error is low across all three types of errors for many features, the disentanglement errors are higher (further from 1) for some rare-valued features.  This means that despite the indirect influence that the audit did find, there may be additional indirect influence it did not pick up for those features.

\begin{figure}[htbp]
\begin{center}
\begin{tabular}{ccc}
\includegraphics[width=1.7in]{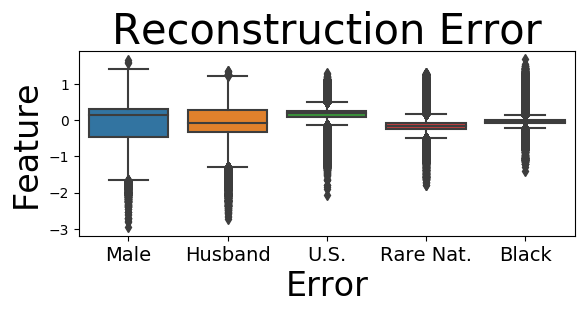}
&
\includegraphics[width=1.7in]{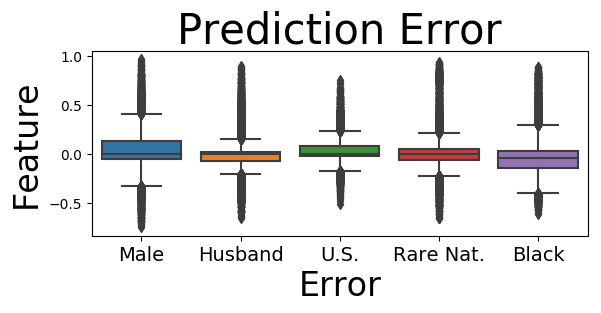} 
&
\includegraphics[width=1.7in]{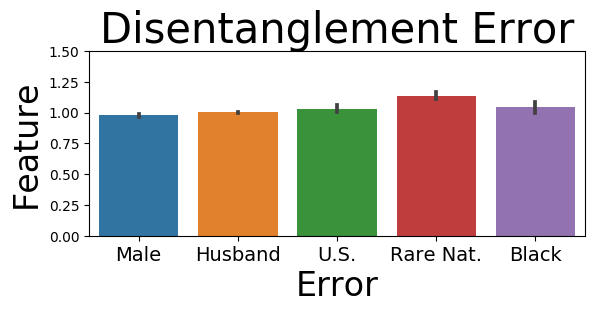}\\

\end{tabular}
\caption{The reconstruction error (left), prediction error (middle), and disentanglement error (right) of selected Adult Income features under an indirect influence audit;  see the supplemental material for the complete figure.}
\label{fig:adult_error}
\end{center}
\end{figure}

\subsection{Comparison to Other Methods}

\begin{figure}[htb]
\begin{center}
\begin{tabular}{ccc}
\includegraphics[width=1.5in]{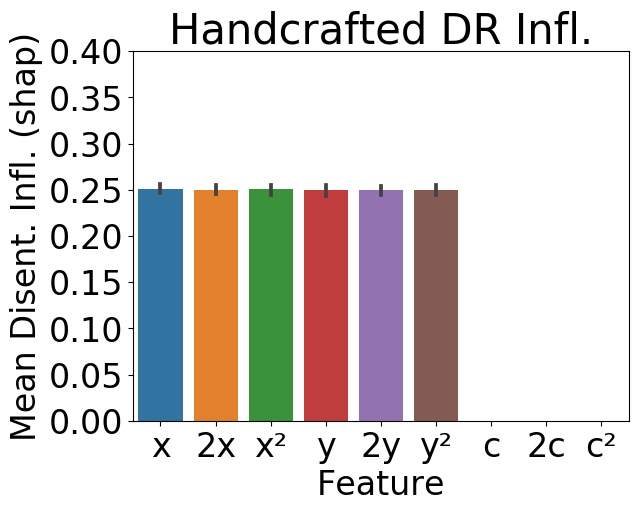}
&
\includegraphics[width=1.5in]{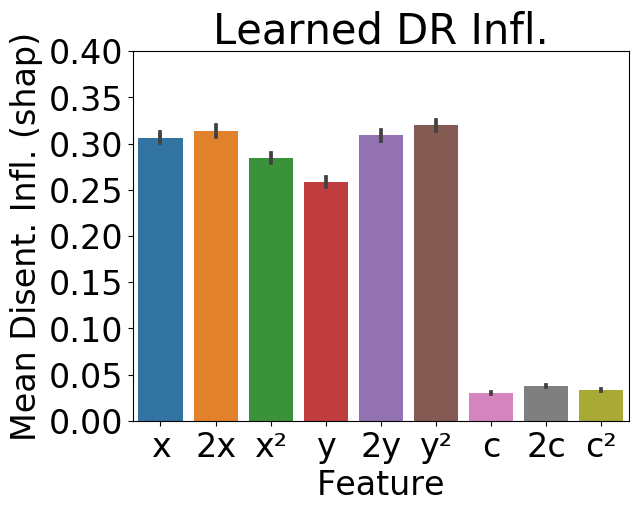}
&
\includegraphics[width=1.5in]{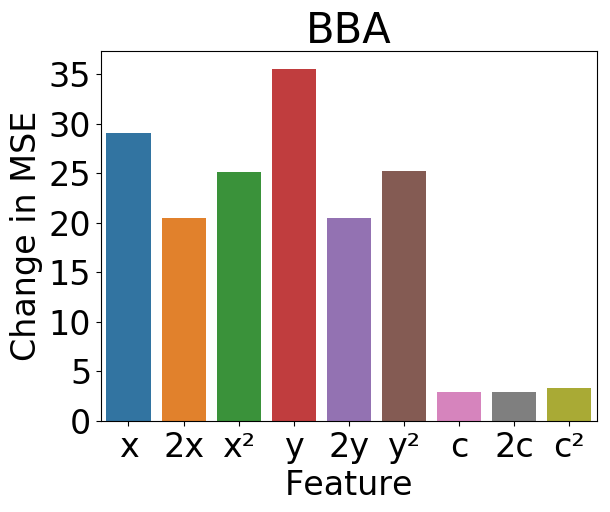}
\end{tabular}
\caption{Comparison on the synthetic $x+y$ data of the \ourtechnique\ using the handcrafted (left) or learned (middle) disentangled representation with the \bba\ approach of \cite{adler2018auditing} (right).}
\label{fig:synth_comparison}
\end{center}
\end{figure}

Here, we compare the \ourtechnique\ results to results on the same datasets and models by the indirect influence technique introduced in \cite{adler2018auditing}, which we will refer to as \bba\ (black-box auditing).\footnote{This method is available via \texttt{pip install BlackBoxAuditing}.  See also: \url{https://github.com/algofairness/BlackBoxAuditing}}  However, this is not a direct comparison, since \bba\ is not able to determine feature influence for individual instances, only influence for a feature taken over all instances.
 In order to compare to our results, we will thus take the mean over all instances of the absolute value of the per feature disentangled influence.  \bba\ was designed to audit classifiers, so in order to compare to the results of \ourtechnique\ we will consider the obscured data they generate as input into our regression models and then report the average change in mean squared error for the case of the synthetic $x+y$ data.
 (\bba\ can't handle dSprites image data as input.)

\begin{wrapfigure}{r}{0.5\textwidth}
\begin{center}
\includegraphics[width=1.7in]{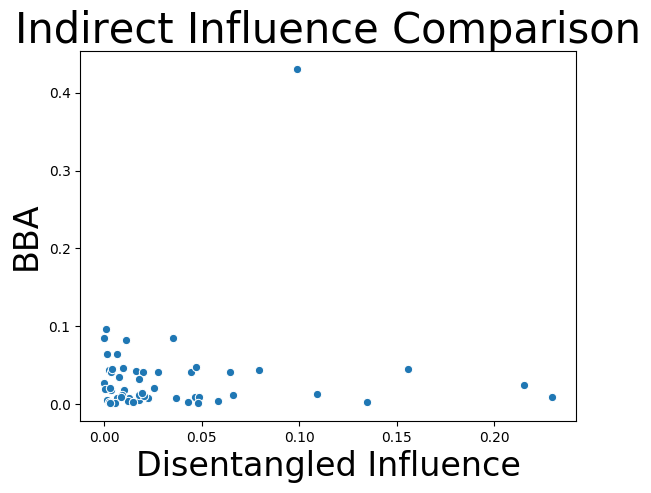}
\caption{Comparison on the Adult data of the \ourtechnique\ versus the \bba\ indirect influence approach of \cite{adler2018auditing}.}
\label{fig:adult_comparison}
\end{center}
\end{wrapfigure}

A comparison of the disentangled influence and \bba\ results on the synthetic $x+y$ data shown in figure \ref{fig:synth_comparison} shows that all three variants of indirect influence are able to determine that the $c, 2c, c^2$ variables have comparatively low influence on the model.  The disentangled influence with a handcrafted disentangled representation shows the correct indirect influence of each feature, while the learned disentangled representation influence is somewhat more noisy, and the BBA results suffer from relying on the mean squared error (i.e., the amount of influence changes based on the feature's value).

Figure \ref{fig:adult_comparison} shows the mean absolute disentangled influence per feature on the x-axis and the \bba\ influence results on the y-axis.  It's clear that the \ourtechnique\ technique is much better able to find features with possible indirect influence on this dataset and model: most of the \bba\ influences are clustered near zero, while the disentangled influence values provide more variation and potential for insight.

\section{Discussion and Conclusion}

In this paper, we introduce the idea of disentangling influence: using the ideas from disentangled representations to allow for indirect influence audits.  We show via theory and experiments that this method works across a variety of problems and data types including classification and regression as well as numerical, categorical, and image data.  The methodology allows us to turn any future developed direct influence measures into indirect influence measures.  In addition to the strengths of the technique demonstrated here, \ourtechnique\ have the added potential to allow for multidimensional indirect influence audits that would, e.g., allow a fairness audit on both race and gender to be performed (without using a single combined race and gender feature \cite{friedler2019comparative}).  We hope this opens the door for more nuanced fairness audits.

{\small 
\bibliographystyle{abbrv}
\bibliography{influence}
}
\newpage
\appendix
\section{Implementation Details}
\label{impDetails}
\paragraph{Synthetic $x+y$ model and disentangled representation information.} In both our synthetic experiments with handcrafted and trained disentangled representations we audit a model with no hidden layers that computes $x+y$ exactly from the features $x$ and $y$. 

The handcrafted disentangled representation is created to map the features with no error.  Suppose for example the protected feature, denoted $p$, was one of the features based on $y$ (one of $y$, $2y$, $y^2$). The disentangled representation used in this case would be $([x,c],[p])$. Here, we see that p will fully reveal the information relating to all of the features based on y, and $X'=[x,c]$ does not reveal any information about the protected feature. Thus, this representation satisfies the independence and preservation of information requirements. The decoder then maps this vector back to the original feature vector $(x, 2x, x^2, y, 2y, y^2, z, 2z, z^2)$, in the natural way. If for example $p=y^2$, the decoder first computes $\sqrt{p}$ to calculate $y$, then uses this to compute $2y$. All features relating to $x$ and $z$ are computed from $x$ and $z$ in the natural way as well.

In the disentangled representation we train the encoder, decoder and discriminator each have two hidden layers of 10 hidden units each. We use a 4 dimensional latent vector. All layers in each model have ReLU activations except for the last layer of the decoder and discriminator which have sigmoid activations. We use $\beta=0.5$ as the importance of disentanglement for the encoder. The minibatch size is 16 and we optimize for 10,000 train steps using $\texttt {SGD}$ with a constant learning rate of 0.01. 

\paragraph{dSprites model and disentangled representation information.}
The model we use to predict the shape from the image is a neural network with three layers of 128, 64, and 32 hidden units respectively, and achieves a $97\%$ prediction accuracy on a held out test set. The test set was randomly drawn as $20\%$ of the data. To generate the disentangled representation we use an encoder, decoder and discriminator each with a single hidden layer of 256, 256 and 64 hidden units respectively. We use a 16 dimensional latent vector. The minibatch size is 100 and we optimize for 10,000 train steps using $\texttt {SGD}$ with a constant learning rate of 0.05. All layers in each model have ReLU activations except for the last layer of the decoder and discriminator which have sigmoid activations. We use $\beta=1$ as the importance of disentanglement for the encoder. 

\paragraph{Adult Income preprocessing, model, and disentangled representation information.}

During preprocessing, categorical features are one-hot encoded and numerical features are normalized to mean 0 and standard deviation 1. The ``education\_num" feature is dropped during preprocessing. For each categorical feature, values which occur in less than 1,000 instances are binned into ``rare\_value". We train a classifier for the ``income>=50K" label with binary cross entropy loss and no hidden layers. The classifier achieves test loss of 0.326 and test accuracy of $84.9\%$.

To generate the disentangled representation we use an encoder, decoder and discriminator which each have two hidden layers with 25 and 12 hidden units respectively. We use a 10 dimensional latent vector. We use $\beta=0.5$ as the importance of disentanglement for the encoder. The models are trained for 4000 train steps with minibatch sizes of 16, using $\texttt {SGD}$ with a constant learning rate of 0.01. We used the canonical train/test split. 

\paragraph{Additional Information.}
All models for the synthetic $x+y$ and dSprites experiments were trained on a MacBook Pro (Early 2015) with a 2.7GHz Processor and 8 GB of RAM. The models for the adult experiments were trained on an NVIDIA Titan Xp GPU. Hyperparameters were chosen via experimentation. Only architectures containing 2 or fewer hidden layers were considered for models used to disentangle the data. The minibatch sizes tested were between 16 and 100, and learning rates between 0.01 and 0.1 were tested. In each experiment, we used at least 5 and no more than 15 evaluation runs. 

\newpage
\section{Full Results for Adult Income Dataset}
\subsection{Direct and Indirect Influence Results}
\begin{figure}[htb]
\begin{center}
\includegraphics[width=2.5in]{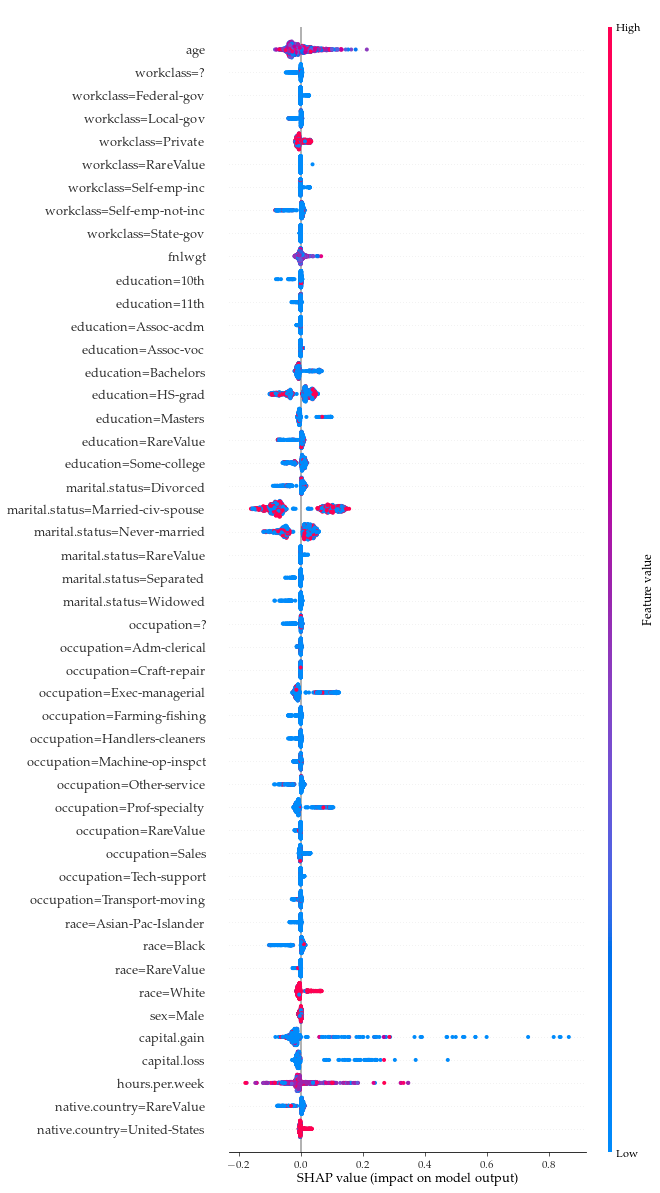}
~~
\includegraphics[width=2.5in]{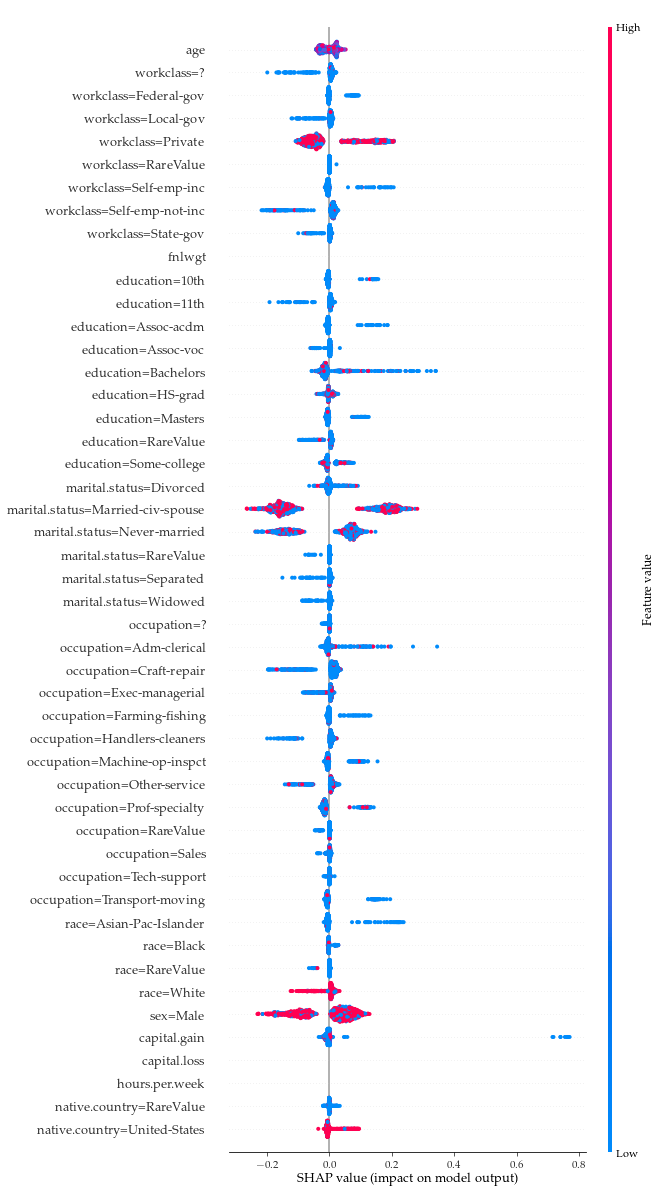}
\caption{The full influence results for the adult data direct (left) and indirect (right) feature influences.}
\label{fig:adult}
\end{center}
\end{figure}

\newpage
\subsection{Error Results}
\begin{figure}[htb]
\begin{center}
\includegraphics[width=2.5in]{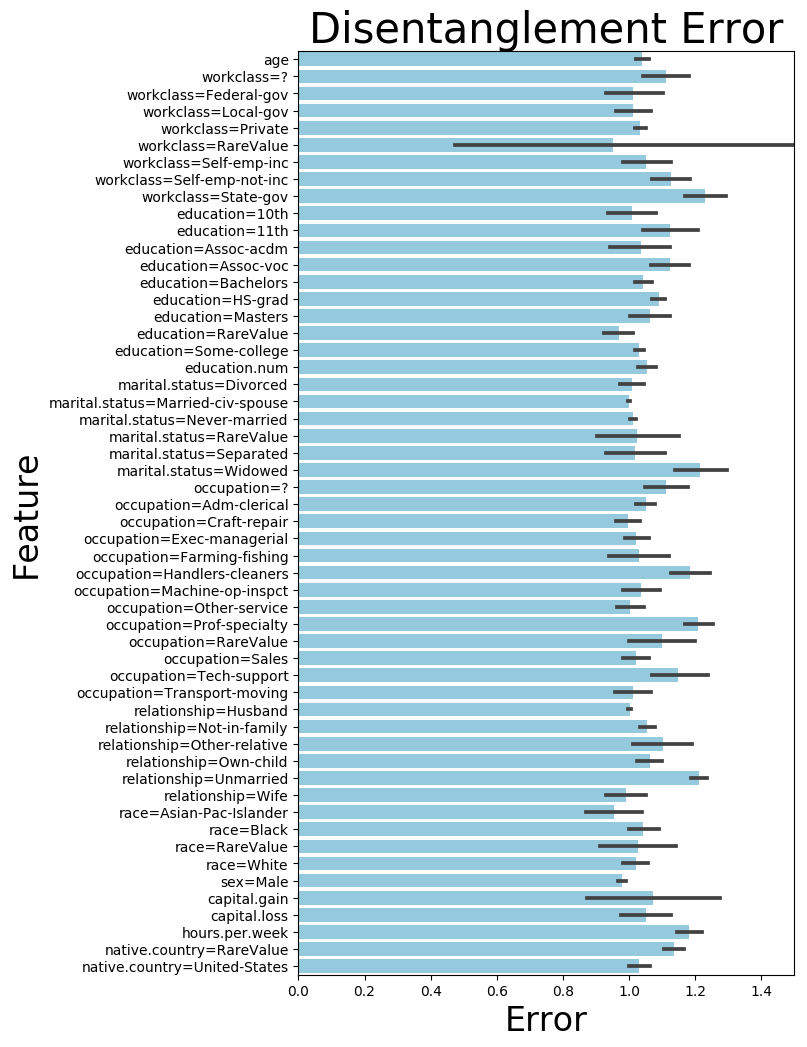} \\
\end{center}
\end{figure}

\begin{figure}[htb]
\begin{center}
\includegraphics[width=2.5in]{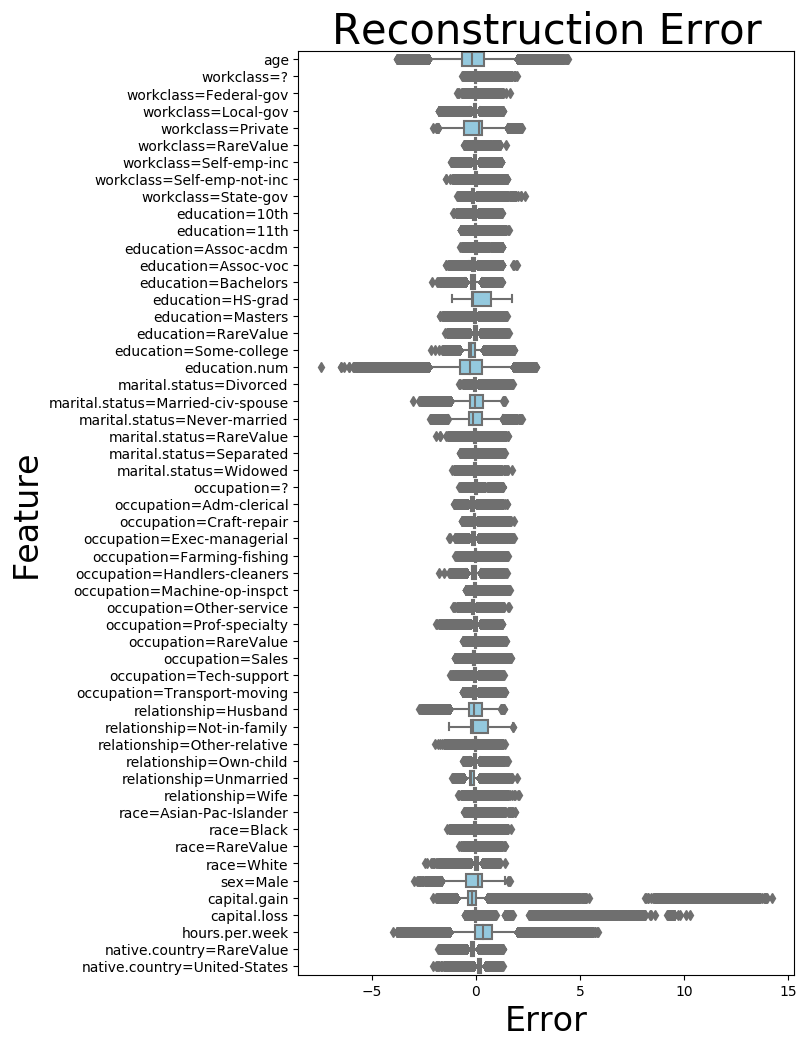}
\includegraphics[width=2.5in]{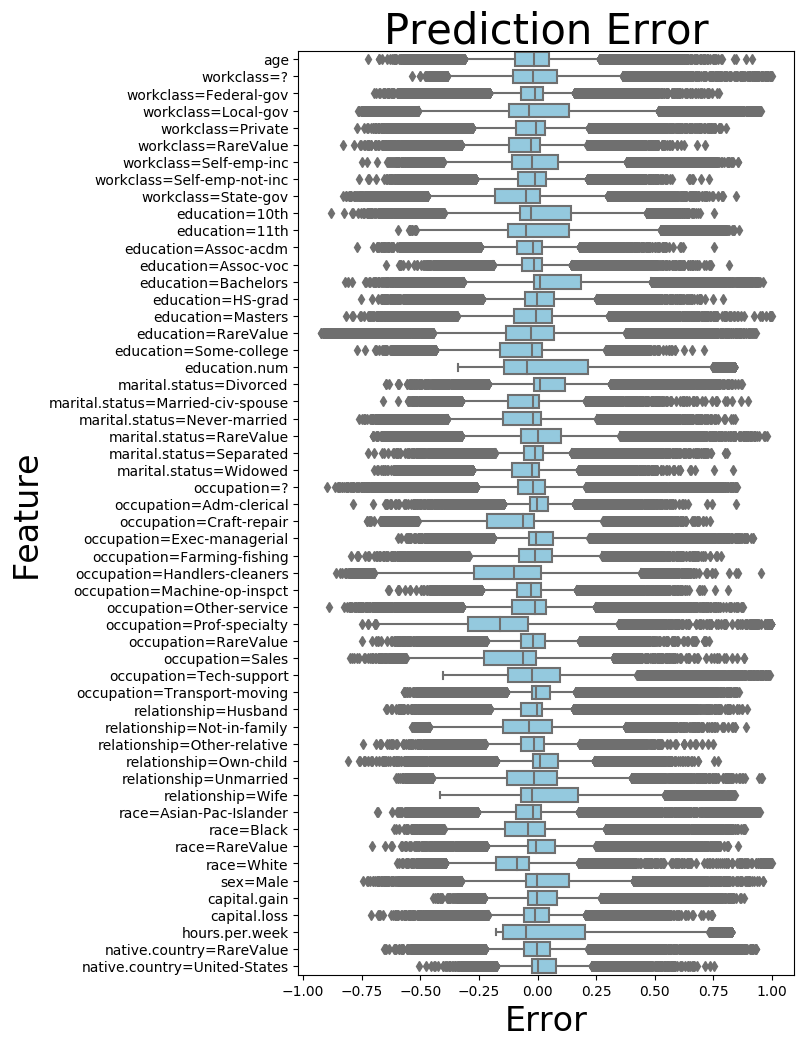}
 \\
 \caption{The full disentanglement (top), reconstruction (left) and prediction (right) error metrics for the adult data experiment.}
\end{center}
\end{figure}

\end{document}